\renewcommand{\algorithmiccomment}[1]{\bgroup\hfill\tiny//~#1\egroup}
\newcommand{\truth}[1]{\ensuremath{\mathbb{I}[#1]}}
\newcommand{\probability}[1]{\ensuremath{\mathbb{P}[#1]}}
\newcommand{\assign}{\leftarrow}  
\newcommand{\R}{\mathds{R}}
\newcommand{\Z}{\mathds{Z}}
\newcommand{\D}{\mathcal{D}}
\DeclarePairedDelimiter{\card}{\lvert}{\rvert}
\DeclarePairedDelimiterX{\normSimple}[1]{\lVert}{\rVert}
{\ifblank{#1}{\mathord{\cdot}}{#1}}
\newcommand{\tvtText}[1]{Train vs. test performance for
    different margins \(\rho\) on instance \texttt{#1}. Each point represents a
    solution encountered by IPBoost while solving the boosting
    problem. Grayscale values indicate the number of base learners used in
    boosted learners; see the legend.}
\newtheorem{lemma}{Lemma}
\newenvironment{proof}{\paragraph{Proof.}}{\hfill$\square$}
\title{IPBoost -- Non-Convex Boosting via Integer Programming}
\author{Marc E. Pfetsch\thanks{Department of Mathematics, TU Darmstadt,
    Germany, pfetsch@opt.tu-darmstadt.de} 
  \and Sebastian Pokutta\thanks{Department of Mathematics, TU Berlin and Zuse Institute Berlin, Berlin,
 Germany; pokutta@zib.de}}
\begin{document}

\maketitle

\begin{abstract}
  Recently non-convex optimization approaches for solving machine learning
  problems have gained significant attention. In
  this paper we explore non-convex boosting in classification by means of integer
  programming and demonstrate real-world practicability of the approach
  while circumventing shortcomings of convex boosting approaches. We
  report results that are comparable to or better than the current
  state-of-the-art.
\end{abstract}

\section{Introduction}
\label{sec:introduction}

Boosting is an important (and by now standard) technique in classification to combine
several \lq{}low accuracy\rq{} learners, so-called \emph{base
  learners}, into a \lq{}high accuracy\rq{} learner, a so-called
\emph{boosted learner}. Pioneered by the AdaBoost approach of
\cite{freund1995desicion}, in recent decades there has been extensive
work on boosting procedures and analyses of their limitations. In a nutshell,
boosting procedures are (typically) iterative schemes that roughly
work as follows: for \(t = 1, \dots, T\) do the following:
\begin{enumerate}
\item Train a learner \(\mu_t\) from a given class of base learners on
the data distribution \(\mathcal D_t\).
\item Evaluate performance of \(\mu_t\) by computing its loss.
\item Push weight of the data distribution \(\mathcal D_t\) towards
  misclassified examples leading to \(\mathcal D_{t+1}\).
\end{enumerate}
Finally, the learners are combined by some form of voting (e.g.,
soft or hard voting, averaging, thresholding). A
close inspection of most (but not all) boosting procedures reveals
that they solve an underlying convex optimization problem over a
convex loss function by means of coordinate gradient descent.
Boosting schemes of this type are often referred to as \emph{convex
  potential boosters}. These procedures can achieve exceptional
performance on many data sets if the data is correctly labeled.
However, it was shown in \cite{long2008random,long2010random} that any
convex potential booster can be easily defeated by a very small amount
of label noise (this also cannot be easily fixed by early
termination). The intuitive reason for this is that convex boosting
procedures might progressively zoom in on the (small percentage of)
misclassified examples in an attempt to correctly label them, while
simultaneously moving distribution weight away from the correctly
labeled examples. As a consequence, the boosting procedure might fail
and produce a boosted learner with arbitrary bad performance on
unseen data.

Let
\(\D = \{(x_i,y_i) \mid i \in I\} \subseteq \R^d \times \{\pm 1\}\) be
a set of training examples and for some logical condition~\(C\),
define \(\truth{C} = 1\) if \(C\) is true and \(\truth{C} = -1\)
otherwise. Typically, the true loss function of interest is of a form
similar to
\begin{align}
  \label{eq:trueLoss}
  \ell(\D, \theta) \coloneqq \sum_{i \in I} \truth{h_\theta(x_i) \neq
      y_i},
\end{align}
i.e., we want to minimize the number of misclassifications, where
\(h_\theta\) is some learner parameterized by \(\theta\); this
function can be further modified to incorporate margin maximization as
well as include a measure of complexity of the boosted learner to
help generalization etc. It is important to observe that the loss in
Equation~\eqref{eq:trueLoss} is non-convex and hard to minimize. Thus,
traditionally this loss has been replaced by various convex
relaxations, which are at the core of most boosting procedures. In
the presence of mislabeled examples (or more generally label noise)
minimizing these convex relaxations might not be a good proxy for minimizing
the true loss function arising from misclassifications.

Going beyond the issue of label noise, one might ask more broadly, why
not directly minimizing misclassifications (with possible
regularizations) if one could? In the past, this has been out of the
question due to the high complexity of minimizing the non-convex loss
function. In this paper, we will demonstrate that this is \emph{feasible and
practical} with today's integer programming techniques. We propose to
directly work with a loss function of the form as given in
\eqref{eq:trueLoss} (and variations) and solve the non-convex
combinatorial optimization problem with state-of-the-art integer programming (IP)
techniques including column generation. This approach generalizes
previous linear programming based approaches (and hence implicitly
convex approaches) in, e.g.,
\cite{demiriz2002linear,goldberg2010boosting,goldberg2012sparse,eckstein2012improved},
while solving classification problems with the true misclassification
loss. We acknowledge that \eqref{eq:trueLoss} is theoretically very hard
(in fact NP-hard as shown, e.g., in \cite{goldberg2010boosting}),
however, we hasten to stress that in real-world computations for
\emph{specific instances} the behavior is often much better than the
theoretical asymptotic complexity. In fact, most real-world instances
are actually relatively ``easy'' and with the availability of very
strong integer programming solvers such as, e.g., the commercial solvers
\texttt{CPLEX}, \texttt{Gurobi}, and \texttt{XPRESS} and the academic
solver \texttt{SCIP}, these problems can be often solved rather
effectively. In fact, integer programming methods have seen a huge
improvement in terms of computational speed as reported in
\cite{savicky2000optimal,bertsimas2017optimal}. The latter reports
that integer programming solving performance has seen a combined
hardware and software speed-up of \(80\) billion from 1991 to 2015
(hardware: \num{570000}, software \num{1400000}) using state-of-the-art
hardware and solvers such as \texttt{CPLEX} (see \cite{cplex}),
\texttt{Gurobi} (see \cite{gurobi}), \texttt{XPRESS} (see \cite{xpress}),
and SCIP (see \cite{GleixnerEtal2018OO}). With this, problems that
traditionally have been deemed unsolvable can be solved in reasonable
short amounts of time making these methods accessible, feasible, and
practical in the context of machine learning allowing to solve a
(certain type of) non-convex optimization problems.

\subsection*{Contribution and Related Work}
\label{sec:contribution} Our contribution can be summarized as follows:

\emph{IP-based boosting.} We propose an integer programming based
boosting procedure. The resulting procedure utilizes column generation
to solve the initial learning problem and is inherently robust to labeling
noise, since we solve the problem for the (true) non-convex loss function. In
particular, our procedure is robust to the instances from
\cite{long2008random,long2010random} that defeat other convex
potential boosters.

Linear Programming (LP) based boosting procedures
have been already explored with \emph{LPBoost}
\cite{demiriz2002linear}, which also relies on column generation to
price the learners. Subsequent work in \cite{leskovec2003linear}
considered LP-based boosting for uneven datasets. We also perform
column generation, however, in an IP framework (see
\cite{desrosiers2005primer} for an introduction) rather than a purely
LP-based approach, which significantly complicates things. In order to control complexity, overfitting, and
generalization of the model typically some sparsity is
enforced. Previous approaches in the context of LP-based boosting have
promoted sparsity by means of cutting planes, see, e.g.,
\cite{goldberg2010boosting,goldberg2012sparse,eckstein2012improved}.
Sparsification can be handled in our approach by solving a delayed
integer program using additional cutting planes.

An interesting alternative use of boosting in the context of training
average learners against rare examples has been explored in
\cite{shalev2016minimizing}; here the \lq{}boosting\rq{} of the data
distribution is performed \emph{while} a more complex learner is
trained. In \cite{freund2015new} boosting in the context of
linear regression has been shown to reduce to a certain form of
subgradient descent over an appropriate loss function. For a general
overview of boosting methods we refer the interested reader to
\cite{schapire2003boosting}. Non-convex approaches to machine
learning problems gained recent attention and (mixed) integer programming, in particular,
has been used successfully to incorporate combinatorial structure in
classification, see, e.g.,~\cite{bertsimas2007classification,chang2012integer,bertsimas2015or,bertsimas2016best},
as well as,
\cite{gunluk2016optimal,bertsimas2017optimal,dash2018boolean,gunluk2018optimal,verwer2019learning};
note that \cite{dash2018boolean} also uses a column generation approach.
Moreover, neural network verification via integer
programming has been treated in
\cite{tjeng2017evaluating,fischetti2018deep}. See also the references
contained in all of these papers.

\emph{Computational results.} We present computational results
demonstrating that IP-based boosting can avoid the bad examples
of~\cite{long2008random}: by far better solutions can be obtained via
LP/IP-based boosting for these instances. We also show that IP-based
boosting can be competitive for real-world instances from the LIBSVM data set. In
fact, we obtain nearly optimal solutions in reasonable
time for the true non-convex cost function. Good solutions can be obtained
if the process is stopped early. While it cannot match the
raw speed of convex boosters, the obtained results are (often) much
better. Moreover, the resulting solutions are often sparse.

\section{IPBoost: Boosting via Integer Programming}
\label{sec:boosting-via-integer}

We will now introduce the basic formulation of our boosting problem,
which is an integer programming formulation based on the standard LPBoost
model from \cite{demiriz2002linear}. While we confine the
exposition to the binary classification case only, for the sake of
clarity, we stress that our approach can
be extended to the multi-class case using standard methods. In
subsequent sections, we will refine the model to include additional
model parameters etc.

Let \((x_1, y_1), \dots, (x_N, y_N)\) be the training set with points
\(x_i \in \R^d\) and two-class labels \(y_i \in \{\pm 1\}\). Moreover, let
\(\Omega \coloneqq \{h_1, \dots, h_L: \R^d \to \{\pm 1\}\}\) be a
class of base learners and let a margin \(\rho \geq 0\) be given. Our
basic boosting model is captured by the following integer programming
problem: 
\begin{align}\label{mod:baseIP}
  \min\; & \sum_{i=1}^N z_i \\
  & \sum_{j=1}^L \eta_{ij}\, \lambda_j + (1 + \rho) z_i \geq
  \rho\quad\forall\, i \in [N],\\
  & \sum_{j=1}^L \lambda_j = 1,\; \lambda \geq 0,\\
  & z \in \{0,1\}^N,
\end{align}
where the \emph{error function} \(\eta\) can take various forms
depending on how we want to treat the output of base learners. For
learner \(h_j\) and training example \(x_i\) we consider the following choices:
\begin{enumerate}[leftmargin=5ex,label=\emph{(\(\roman*\))}]
\item \label{item:1}\(\pm 1\) classification from learners:\\
  \(\eta_{ij} \coloneqq 2\, \truth{h_j(x_i) = y_i} -1 = y_i \cdot h_j(x_i)\);
\item \label{item:2} class probabilities of learners:\\
  \(\eta_{ij} \coloneqq 2\,\probability{h_j(x_i) = y_i} -1 \);
\item \label{item:3} \texttt{SAMME.R} error function for learners:\\
  \(\eta_{ij} \coloneqq \tfrac{1}{2} y_i \log\Big(\frac{\probability{h_j(x_i) = 1}}{\probability{h_j(x_i) = -1}}\Big)\).
\end{enumerate}
In the first case we perform a hard minimization of the classification
error, in the second case we perform a soft minimization of
the classification error, and in the last one we minimize the
\texttt{SAMME.R} error function as used in the (multi-class) AdaBoost
variant in~\cite{zhu2009multi}. The \texttt{SAMME.R}
error function allows a very confident learner to overrule a larger
number of less confident learners predicting the opposite class. 

The \(z_i\) variable in the model above indicates whether example
\(i \in [N] \coloneqq \{1, \dots N\}\) satisfies the classification
requirement: \(z_i = 0\) if example \(i\) is correctly labeled by the
boosted learner \(\sum_{j} h_j \lambda_j\) with margin at least
\(\rho\) with respect to the utilized error function \(\eta\);
in an optimal solution, if a variable if 1 this implies misclassification,
otherwise by minimizing you could have set it to zero.
The \(\lambda_j\) with \(j \in [L]\) form a distribution over the
family of base learners. The only non-trivial family of inequalities
in~\eqref{mod:baseIP} ranges over examples \(i \in [N]\) and enforces
that the combined learner \(\sum_{j \in [L]} h_j \lambda_j\)
classifies example \(i \in N\) correctly with margin at least \(\rho\)
(we assume throughout that \(\rho \leq 1\))
or \(z_i = 1\), i.e., the example is disregarded and potentially
misclassified. By minimizing \(\sum_{i \in N} z_i\), the program
computes the best combination of base learners maximizing the number
of examples that are correctly classified with margin at least
\(\rho\). The margin parameter \(\rho\) helps generalization as it
prevents base learners to be used to explain low-margin noise.

Before we continue with the integer programming based boosting
algorithm we would like to remark the following about the solution
structure of optimal solutions with respect to the chosen margin:

\begin{lemma}[Structure of high-margin solutions]\mbox{}\\ 
  Let \((\lambda,z)\) be an optimal solution to the integer
  program~\eqref{mod:baseIP} for a given margin \(\rho\) using error
  function~\ref{item:1}. Further let
  \(I \coloneqq \{i \in [N] \mid z_i = 0\}\) and
  \(J \coloneqq \{j \in [L] \mid \lambda_j > 0\}\). If the optimal
  solution is non-trivial, i.e., \(I \neq \emptyset\), then the
  following holds:
  \begin{enumerate}[leftmargin=4ex,itemsep=0ex]
  \item If \(\rho = 1\), then there exists an optimal solution with
    margin \(1\) using
    only a single base learner \(h_j\) for some \(j \in J\).
  \item If there exists \(\bar{\jmath} \in J\) with
    \(\lambda_{\bar{\jmath}} > \frac{1 - \rho}{2}\), then \(h_{\bar{\jmath}}\) by itself is already an
    optimal solution with margin~\(1\).
  \item If \(\card{J} < \frac{2}{1-\rho}\), then there exists
    \(\bar{\jmath} \in J\) with \(h_{\bar{\jmath}}\) by itself being already an
    optimal solution with margin~\(1\). In particular for \(\rho >
    0\), the statement is non-trivial. 
  \end{enumerate}
\end{lemma}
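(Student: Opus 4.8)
The plan is to rewrite the margin constraint for error function~\ref{item:1} in a form that exposes its combinatorial content, and then handle the three parts in order, each reducing to the previous. Since \(\eta_{ij} = y_i h_j(x_i) \in \{-1,+1\}\) and \(\sum_j \lambda_j = 1\), for any fixed \(\lambda\) the margin of example \(i\) can be written as \(\sum_{j} \eta_{ij} \lambda_j = 1 - 2 q_i\), where \(q_i \coloneqq \sum_{j : h_j(x_i) \neq y_i} \lambda_j\) is the total weight that \(\lambda\) places on the base learners \emph{misclassifying} \(i\). With this substitution the defining condition \(i \in I\) (equivalently \(z_i = 0\) at an optimum, since any \(z_i\) that can be \(0\) is set to \(0\) by minimality, as observed in the model description) becomes simply \(q_i \leq \frac{1-\rho}{2}\). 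This reformulation is the one genuinely new step; everything afterwards is bookkeeping.

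Next I would record a preservation-of-optimality observation used in all three parts: if some single learner \(h_{\bar{\jmath}}\) with \(\bar{\jmath} \in J\) satisfies \(h_{\bar{\jmath}}(x_i) = y_i\) for every \(i \in I\), then setting \(\lambda_{\bar{\jmath}} = 1\) (and all other weights \(0\)) while keeping \(z_i = 0\) exactly on \(I\) is feasible and attains margin \(\eta_{i\bar{\jmath}} = 1\) on each \(i \in I\); its objective is \(N - \card{I}\), which equals the value of the original optimum, so it is again optimal. Hence in each part it suffices to exhibit one \(\bar{\jmath} \in J\) that classifies all of \(I\) correctly.

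For part~(1), \(\rho = 1\) forces \(q_i \leq 0\), hence \(q_i = 0\), for every \(i \in I\); thus no \(j \in J\) misclassifies any \(i \in I\), and any \(\bar{\jmath} \in J\) works. For part~(2), suppose \(\lambda_{\bar{\jmath}} > \frac{1-\rho}{2}\). If \(\bar{\jmath}\) misclassified some \(i \in I\), then \(\bar{\jmath}\) would contribute to \(q_i\), giving \(q_i \geq \lambda_{\bar{\jmath}} > \frac{1-\rho}{2}\), contradicting \(q_i \leq \frac{1-\rho}{2}\); so \(\bar{\jmath}\) classifies all of \(I\) correctly and the observation applies. For part~(3), I would reduce to part~(2) by averaging: since \(\sum_{j \in J} \lambda_j = 1\), some \(\bar{\jmath} \in J\) has \(\lambda_{\bar{\jmath}} \geq \frac{1}{\card{J}}\), and the hypothesis \(\card{J} < \frac{2}{1-\rho}\) gives \(\frac{1}{\card{J}} > \frac{1-\rho}{2}\), so \(\lambda_{\bar{\jmath}} > \frac{1-\rho}{2}\) and part~(2) finishes it. I would close by noting that for \(\rho = 0\) the bound reads \(\card{J} < 2\), i.e. \(\card{J} = 1\), which is vacuous, whereas for \(\rho > 0\) it permits \(\card{J} \geq 2\) and is therefore genuinely informative.

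I do not anticipate a serious obstacle; the only things to get right are the equivalence \(q_i \leq \frac{1-\rho}{2} \iff \sum_j \eta_{ij}\lambda_j \geq \rho\) and the check that, when \(z_i\) must stay \(1\) (for \(i \notin I\)), the constraint \(\sum_j \eta_{ij}\lambda_j + (1+\rho)z_i \geq \rho\) is automatically satisfied because \(\sum_j \eta_{ij}\lambda_j \geq -1\). Both are immediate from \(\eta_{ij} \in \{-1,+1\}\) together with \(\sum_j \lambda_j = 1\), so the entire essential content sits in the single reformulation of the margin as \(1 - 2q_i\).
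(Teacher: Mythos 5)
Your proof is correct and follows essentially the same route as the paper's: part~(1) by forcing every learner in \(J\) to agree on all of \(I\), part~(2) by the same contradiction against the weight bound \(\frac{1-\rho}{2}\), and part~(3) by a pigeonhole reduction to part~(2). The rewriting of the margin as \(1-2q_i\) is only a cosmetic repackaging of the paper's direct estimate, and your explicit verification that the single-learner solution stays feasible with the same objective value is a welcome (slightly more careful) spelling-out of a step the paper leaves implicit.
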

\begin{proof}
  For the first case observe that 
  \[
  \sum_{j \in J} \eta_{ij} \lambda_j \geq 1,
  \] 
  holds for all \(i \in I\). As \(\sum_{j \in J} \lambda_j = 1\) and
  \(\lambda_j > 0\) for all \(j \in J\), we have that \(\eta_{ij} = 1\)
  for all \(i \in I\), \(j \in J\). Therefore the predictions of all
  learners \(h_j\) with \(j \in J\) for examples \(i \in I\) are
  identical and we can simply choose any such learner \(h_j\) with \(j
  \in J\) arbitrarily and set \(\lambda_j = 1\). 

  For the second case observe as before that
  \(
  \sum_{j \in J} \eta_{ij} \lambda_j \geq \rho
  \) 
  holds for all \(i \in I\). We claim that \(\eta_{i\bar{\jmath}} = 1\) for
  all \(i \in I\). For contradiction suppose not, i.e., there exists
  \(\bar{\imath} \in I\) with \(\eta_{\bar{\imath} \bar{\jmath}} = -1\). 
  Then 
  \[
  \sum_{j \in J} \eta_{\bar{\imath}j} \lambda_j < \underbrace{\bigg( \sum_{j \in J
    \setminus \{\bar{\jmath}\}} \lambda_j  \bigg)}_{< 1- \frac{1-\rho}{2}} - \frac{1-\rho}{2} < \rho,
  \]
  using \(\eta_{ij} \leq 1\), \(\sum_{j \in J} \lambda_j = 1\), and
  \(\lambda_j > 0\) for all \(j \in J\). This contradicts \(\sum_{j \in
    J} \eta_{ij} \lambda_j \geq \rho\) and therefore \(\eta_{i\bar{\jmath}} = 1\) for
  all \(i \in I\). Thus \(h_{\bar{\jmath}}\) by itself is already an optimal
  solution satisfying even the (potentially) higher margin of \(1 \geq \rho\) on
  examples \(i \in I\).

  Finally, for the last case observe that if \(\card{J} <
  \smash{\frac{2}{1-\rho}}\), then together with \(\sum_{j \in J} \lambda_j = 1\),
  and \(\lambda_j > 0\) for all \(j \in J\), it follows that there exists
  \(\bar{\jmath} \in J\) with \(\lambda_{\bar{\jmath}} > \smash{\frac{1 - \rho}{2}}\). Otherwise
  \(\sum_{j \in J} \lambda_j \leq \card{J} \smash{\frac{1 - \rho}{2} < 1}\); a
  contradiction. We can now apply the second case.
\end{proof}

Similar observations hold for error functions \ref{item:2} and
\ref{item:3} with the obvious modifications to include the actual
value of \(\eta_{ij}\) not just its sign. 

Our proposed solution process consists of two parts. We first solve the integer
program in \eqref{mod:baseIP} using column generation. Once
this step is completed, the solution can be sparsified (if necessary) by means of the
model presented in Section~\ref{sec:sparsification}, where we
trade-off classification performance with model complexity.

\subsection{Solution Process using Column Generation}
\label{sec:column-generation}

The reader will have realized that~\eqref{mod:baseIPdual} is not
practical, since we typically have a very large if not infinite
class of base learners \(\Omega\); for convenience we assume here that
\(\Omega\) is finite but potentially very large. This has been already observed
before and dealt with effectively via column generation in
\cite{demiriz2002linear,goldberg2010boosting,goldberg2012sparse,eckstein2012improved}.
We will follow a similar strategy here, however, we generate columns
\emph{within} a branch-and-bound framework leading effectively to a
\emph{branch-and-bound-and-price} algorithm that we are using; this is
significantly more involved compared to column generation in linear
programming. We detail this approach in the following.

The goal of column generation is to provide an efficient way to solve the
linear programming (LP) relaxation of~\eqref{mod:baseIP}, i.e., the \(z_i\)
variables are relaxed and allowed to assume fractional values. Moreover,
one uses a subset of the columns, i.e., base learners, \(\mathcal{L}
\subseteq [L]\). This yields the so-call \emph{restricted master (primal)
  problem}
\begin{align}
  \min\; & \sum_{i=1}^N z_i\label{mod:restrictedMaster}\\
  & \sum_{j \in \mathcal{L}} \eta_{ij}\, \lambda_j + (1 + \rho) z_i \geq
  \rho\quad\forall\, i \in [N],\\
  & \sum_{j \in \mathcal{L}} \lambda_j = 1,\; \lambda \geq 0,\; z \in [0,1]^N.
\end{align}
Its \emph{restricted dual problem} is
\begin{align}
  \max\; & \rho \sum_{i=1}^N w_i + v - \sum_{i=1}^N u_i\label{mod:baseIPdual}\\
  & \sum_{i=1}^N \eta_{ij}\, w_i + v \leq 0 \quad\forall\, j \in \mathcal{L},\\
  & \;(1 + \rho) w_i - u_i \leq 1\quad\forall\, i \in [N],\\
  & w \geq 0,\; u \geq 0,\; v \text{ free}.
\end{align}
Consider a solution \((w^*, v^*, u^*) \in \R^N \times \R \times \R^N\) of~\eqref{mod:baseIPdual}. The
so-called \emph{pricing problem} is to decide whether this solution is
actually optimal or whether we can add further constraints, i.e., columns
in the primal problem. For this, we need to check whether \((w^*, v^*,
u^*)\) is feasible for the complete set of constraints
in~\eqref{mod:baseIPdual}. In the following, we will assume that the
variables~\(z_i\) are always present in the primal and therefore that the
corresponding inequalities \((1 + \rho) w_i - v \leq 1\) are satisfied for
each \(i \in [N]\). Thus, the main task of the pricing problem is to decide
whether there exists \(j \in [L] \setminus \mathcal{L}\) such that
\begin{equation}\label{eq:PricingProb}
  \sum_{i=1}^N \eta_{ij}\, w_i^* + v^* > 0.
\end{equation}
If such an \(j\) exists, then it is added to~\(\mathcal{L}\), i.e.,
to~\eqref{mod:restrictedMaster}, and the process is iterated. Otherwise,
both~\eqref{mod:restrictedMaster} and~\eqref{mod:baseIPdual} have been
solved to optimality.

The pricing problem~\eqref{eq:PricingProb} can now be rephrased as
follows: Does there exist a base learner~\(h_j \in \Omega\) such
that~\eqref{eq:PricingProb} holds? For this, the \(w_i^*\) can be seen
as weights over the points \(x_i\), \(i \in [N]\), and we have to
classify the points according to these weights. For most base
learners, this task just corresponds to an ordinary classification or
regression step, depending on the form chosen for \(\eta_{ij}\). Note,
however, that in practice~\eqref{eq:PricingProb} is not solved to
optimality, but is rather solved heuristically. If we find a base
learner \(h_j\) that satisfies~\eqref{eq:PricingProb}, we continue,
otherwise we stop.

\begin{algorithm}[tb]
  \caption{IPBoost}
  \label{alg:ipBoost}
  \begin{algorithmic}[1]
    \REQUIRE Examples
    \(\D = \{(x_i,y_i) \mid i \in I\} \subseteq \R^d \times
    \{\pm 1\}\), class of base learners \(\Omega\), margin \(\rho\)
    \ENSURE Boosted learner \(\sum_{j \in \mathcal{L}^*} h_j \lambda_j^*\) with base learners \(h_j\) and weights \(\lambda_j^*\)
    \STATE \(\mathcal{T} \assign \{([0,1]^N, \varnothing)\}\) \COMMENT{set of local bounds and learners for open subproblems}
    \STATE \(U \assign \infty\), \(\mathcal{L}^* \assign \varnothing\) \COMMENT{Upper bound on optimal objective.}
    \WHILE{\(\mathcal{T} \neq \varnothing\)}
    \STATE Choose and remove \((B,\mathcal{L})\) from \(\mathcal{T}\).
    \REPEAT
    \STATE Solve~\eqref{mod:restrictedMaster} using the local bounds on \(z\) in \(B\) with optimal dual solution \((w^*,v^*,u^*)\).
    \STATE Find learner \(h_j \in \Omega\) satisfying~\eqref{eq:PricingProb}.\COMMENT{Solve pricing problem.}
    \UNTIL{\(h_j\) is not found}
    \STATE Let \((\tilde{\lambda}, \tilde{z})\) be the final solution
       of~\eqref{mod:restrictedMaster} with base learners \(\tilde{\mathcal{L}} = \{j \mid \tilde{\lambda}_j > 0\}\).
    \IF{\(\tilde{z} \in \Z^N\) and \(\sum_{i=1}^N \tilde{z_i} < U\)}
    \STATE \(U \assign \sum_{i=1}^N \tilde{z}_i\), \(\mathcal{L}^* \assign \tilde{\mathcal{L}}\), \(\lambda^* \assign \tilde{\lambda}\)
       \COMMENT{Update best solution.}
    \ELSE
    \STATE Choose \(i \in [N]\) with \(\tilde{z}_i \notin \Z\).
    \STATE Set \(B_0 \assign B \cap \{z_i \leq 0\}\), \(B_1 \assign B \cap \{z_i \geq 1\}\).
    \STATE Add \((B_0, \tilde{\mathcal{L}})\), \((B_1, \tilde{\mathcal{L}})\) to \(\mathcal{T}\).
       \COMMENT{Create new branching nodes.}
    \ENDIF
    \ENDWHILE
    \STATE \emph{Optionally sparsify final solution \(\mathcal{L}^*\)}.
  \end{algorithmic}
\end{algorithm}

The process just described allows to solve the relaxation
of~\eqref{mod:baseIP}. The optimal misclassification values are determined
by a branch-and-price process that branches on the variables~\(z_i\) and
solves the intermediate LPs using column generation. Note that the \(z_i\)
are always present in the LP, which means that no problem-specific
branching rule is needed, see, e.g., \cite{BarJNSV98} for a discussion. In
total, this yields Algorithm~\ref{alg:ipBoost}.

The output of the algorithm is a set of base learners~\(\mathcal{L}^*\) and
corresponding weights \(\lambda_j^*\). A classifier can be obtained by
\emph{voting}, i.e., a given point~\(x\) is classified by each of the base
learners resulting in \(\xi_j\), for which we again can use the three
options \ref{item:1}--\ref{item:3} above. We then take the weighted
combination and obtain the predicted label as \(\text{sgn}(\sum_{j \in
  \mathcal{L}^*} \xi_j \lambda_j^*)\).

In the implementation, we use the following important components. First, we
use the framework SCIP that automatically applies primal heuristics, see,
e.g., \cite{Berthold14} for an overview. These heuristics usually take the
current solution of the relaxation and try to build a feasible solution
for~\eqref{mod:baseIP}. In the current application, the most important
heuristics are rounding heuristics, i.e., the \(z_i\) variables are rounded
to \(0\) or \(1\), but large-scale neighborhood heuristics sometimes
provide very good solutions as well. Nevertheless, we disable diving
heuristics, since these often needed a long time, but never produced a
feasible solution. In total, this often generates many feasible solutions
along the way.

Another trick that we apply is the so-called \emph{stall limit}. The solver
automatically stops if the best primal solution could not be (strictly)
improved during the last \(K\) nodes processed in the branch-and-bound tree
(we use \(K = 5000\)).

Furthermore, preliminary experiments have shown that the intermediate
linear programs that have to be solved in each iteration become
increasingly hard to solve by the simplex algorithm for a large number of
training points. We could apply bagging~\cite{Bre96}, but obtained good
results with just subsampling \num{30000} points if their number \(N\) is larger than
this threshold.

Furthermore, we perform the following post-processing. For the best solution
that is available at the end of the branch-and-bound algorithm, we fix the
integer variables to the values in this solution. Then we maximize the
margin over the learner variables that were used in the solution, which is
just a linear program. In most cases, the margin can be slightly improved
in this way, hoping to get improved generalization.

\subsection{Sparsification}
\label{sec:sparsification}

One of the challenges in boosting is to balance model accuracy vs.~model
generalization, i.e., to prevent overfitting. Apart from pure
generalization considerations, a sparse model often lends itself more
easily to interpretation, which might be important in certain applications.

There are essentially two techniques that are commonly used in this
context. The first one is early stopping, i.e., we only perform a fixed
number of boosting iterations, which would correspond to only generating a
fixed number of columns. The second common approach is to regularize the
problem by adding a complexity term for the learners in the objective
function, so that we minimize \( \sum_{i=1}^N z_i + \sum_{j=1}^L \alpha_j
y_j\). Then we can pick \(\alpha_j\) as a function of the complexity of the
learner \(h_j\). For example, in \cite{cortes2014deep} boosting across
classes of more complex learners has been considered and the \(\alpha_j\)
are chosen to be proportional to the Rademacher complexity of the learners
(many other measures might be equally justified).

In our context, it seems natural to consider the following integer
program for sparsification:
\begin{align}
  \min\; & \sum_{i=1}^N z_i + \sum_{j=1}^L \alpha_j y_j\label{mod:sparseIP} \\
  & \sum_{j=1}^L \eta_{ij}\, \lambda_j + (1 + \rho) z_i \geq
  \rho\quad\forall\, i \in [N],\\
  & \sum_{j=1}^L \lambda_j = 1,\; 0 \leq \lambda_j \leq y_j\quad\forall\, j \in [L],\\
  & z \in \{0,1\}^N,\; y \in \{0,1\}^L,
\end{align}
with \(\eta_{ij}\) as before. The structure of this sparsification
problem that involves additional binary variables~\(y\) cannot be easily
represented within the column generation setup used to solve
model~\eqref{mod:baseIP}, because the upper bounds on \(\lambda_j\)
implied by \(y_j\) would need to represented in dual problem, giving
rise to exponentially many variables in the dual. In principle, one
could handle a cardinality constraint on the \(y_j\) variables using a
problem specific branching rule; this more involved algorithm is
however beyond the scope of this paper. In consequence, one can
solve the sparsification problem separately for the columns that have
been selected in phase \(1\) once this phase is completed. This is
similar to~\cite{goldberg2010boosting}, but directly aims to solve
the MIP rather than a relaxation. Moreover, one can apply so-called IIS-cuts,
following~\cite{Pfetsch2008}. Using the Farkas lemma, the idea is to
identify subsets \(I \subseteq [N]\) such that the system
\[
\sum_{j=1}^L \eta_{ij}\, \lambda_j \geq \rho,\; i \in I,\quad \sum_{j=1}^L
\lambda_j = 1,\; \lambda \geq 0,
\]
is infeasible. In this case the cut
\[
\sum_{i \in I} z_i \geq 1
\]
is valid. Such sets~\(I\) can be found by searching for vertices of the
corresponding alternative polyhedron. If this is done iteratively
(see~\cite{Pfetsch2008}), many such cuts can be found that help to
strengthen the LP relaxation. These cuts dominate the ones
in~\cite{goldberg2010boosting}, but one needs to solve an LP for each
vertex/cut.

\section{Computational Results}
\label{sec:comp-results}

To evaluate the performance of IPBoost, we ran a wide range of tests on
various classification tasks. Due to space limitations, we will only be
able to report aggregate results here; additional more extensive results
can be found in the Supplementary
Material~\ref{sec:computational-tests}. 

\paragraph{Computational Setup.}

All tests were run on a Linux cluster with Intel Xeon quad core CPUs
with 3.50GHz, 10 MB cache, and 32 GB of main memory. All runs were
performed with a single process per node; we stress, in particular,
that we run all tests as \emph{single thread / single core} setup,
i.e., each test uses only a single node in single thread mode. We used a
prerelease version of SCIP 7.0.0 with SoPlex 5.0.0 as LP-solver; note that
this combination is completely available in source code and free for
academic use. The main part of the code
was implemented in C, calling the python framework
\texttt{scikit-learn} \cite{pedregosa2011scikit} at several
places.
We use the decision tree implementation of
\texttt{scikit-learn} with a maximal depth of 1, i.e., a decision stump, as base learners for
all boosters. We
benchmarked IPBoost against our own implementation of
LPBoost~\cite{demiriz2002linear} as well as the AdaBoost
implementation in version 0.21.3 of \texttt{scikit-learn} using 100
iterations; note that we always report the number of pairwise distinct base
learners for AdaBoost. We performed \(10\) runs for each instance with varying random seeds and
we report average accuracy and standard deviations.  Note that we use
a time limit of one hour for each run of IPBoost. The reported
solution is the best solution available at that time.

\paragraph{Results on Constructed Hard Instances.}

\begin{table*}
   \small
   \caption{Averages of the \emph{test} accuracies for \emph{hard instances}. The
	 table shows the accuracies and standard deviations as well as the number of learners \(L\) for three algorithms using \(\rho = 0.05\) for 10 different seeds; best solutions are marked with *; using
	 \emph{\(\pm 1\) values for prediction and voting}.}
   \label{tab:hard2}
   \begin{tabular*}{\textwidth}{@{}rr@{\extracolsep{\fill}}r@{\extracolsep{0ex}\quad}r@{\;}>{$}c<{$}@{\;}r@{\quad}r@{\extracolsep{\fill}}r@{\extracolsep{0ex}\quad}r@{\;}>{$}c<{$}@{\;}r@{\quad}r@{\extracolsep{\fill}}r@{\extracolsep{0ex}\quad}r@{\;}>{$}c<{$}@{}r@{\quad}r@{}}\toprule
                       &  &  \multicolumn{5}{c}{IPBoost}                     &  \multicolumn{5}{c}{LPBoost}                       & \multicolumn{5}{c}{AdaBoost}  \\
         \(N\) &\(\gamma\)&   & \multicolumn{3}{c}{score} & \(L\)            &   & \multicolumn{3}{c}{score} & \(L\)              &   & \multicolumn{3}{c}{score} & \(L\)  \\ \midrule
               2000 & 0.1 & * & \num{69.05} & \pm & \num{ 2.54} & \num{ 4.8} &   & \num{  66.22} & \pm & \num{ 1.73} & \num{ 2.0} &   & \num{  58.58} & \pm & \num{ 2.77} & \num{20.9} \\
               4000 & 0.1 & * & \num{68.61} & \pm & \num{ 1.50} & \num{ 4.6} &   & \num{  65.23} & \pm & \num{ 1.95} & \num{ 2.0} &   & \num{  55.45} & \pm & \num{ 2.99} & \num{20.9} \\
               8000 & 0.1 & * & \num{67.26} & \pm & \num{ 1.62} & \num{ 3.6} &   & \num{  64.58} & \pm & \num{ 1.05} & \num{ 2.0} &   & \num{  53.24} & \pm & \num{ 1.68} & \num{20.9} \\
              16000 & 0.1 & * & \num{67.50} & \pm & \num{ 1.48} & \num{ 3.3} &   & \num{  64.73} & \pm & \num{ 0.80} & \num{ 2.0} &   & \num{  51.85} & \pm & \num{ 0.80} & \num{21.0} \\
              32000 & 0.1 & * & \num{67.36} & \pm & \num{ 1.55} & \num{ 2.6} &   & \num{  65.18} & \pm & \num{ 0.55} & \num{ 2.0} &   & \num{  51.22} & \pm & \num{ 0.73} & \num{20.9} \\
              64000 & 0.1 & * & \num{66.65} & \pm & \num{ 1.04} & \num{ 2.5} &   & \num{  65.17} & \pm & \num{ 0.35} & \num{ 2.0} &   & \num{  50.48} & \pm & \num{ 0.49} & \num{20.9} \\
             2000 & 0.075 & * & \num{71.30} & \pm & \num{ 2.06} & \num{ 4.6} &   & \num{  66.55} & \pm & \num{ 1.89} & \num{ 2.0} &   & \num{  57.95} & \pm & \num{ 2.83} & \num{21.1} \\
             4000 & 0.075 & * & \num{70.20} & \pm & \num{ 1.69} & \num{ 4.1} &   & \num{  66.54} & \pm & \num{ 1.58} & \num{ 2.0} &   & \num{  55.27} & \pm & \num{ 2.77} & \num{21.0} \\
             8000 & 0.075 & * & \num{68.41} & \pm & \num{ 1.73} & \num{ 3.8} &   & \num{  65.38} & \pm & \num{ 1.05} & \num{ 2.0} &   & \num{  53.14} & \pm & \num{ 1.51} & \num{21.0} \\
            16000 & 0.075 & * & \num{68.10} & \pm & \num{ 2.18} & \num{ 2.9} &   & \num{  65.63} & \pm & \num{ 0.81} & \num{ 2.0} &   & \num{  51.73} & \pm & \num{ 0.67} & \num{21.0} \\
            32000 & 0.075 & * & \num{68.06} & \pm & \num{ 1.47} & \num{ 2.6} &   & \num{  66.17} & \pm & \num{ 0.62} & \num{ 2.0} &   & \num{  51.12} & \pm & \num{ 0.61} & \num{20.9} \\
            64000 & 0.075 & * & \num{67.92} & \pm & \num{ 1.05} & \num{ 2.4} &   & \num{  66.12} & \pm & \num{ 0.33} & \num{ 2.0} &   & \num{  50.35} & \pm & \num{ 0.47} & \num{21.0} \\
              2000 & 0.05 & * & \num{72.20} & \pm & \num{ 1.92} & \num{ 5.1} &   & \num{  67.05} & \pm & \num{ 1.71} & \num{ 2.0} &   & \num{  57.50} & \pm & \num{ 2.51} & \num{21.0} \\
              4000 & 0.05 & * & \num{71.74} & \pm & \num{ 1.59} & \num{ 4.9} &   & \num{  67.27} & \pm & \num{ 1.69} & \num{ 2.0} &   & \num{  54.75} & \pm & \num{ 2.47} & \num{20.9} \\
              8000 & 0.05 & * & \num{70.09} & \pm & \num{ 1.96} & \num{ 3.4} &   & \num{  66.19} & \pm & \num{ 1.22} & \num{ 2.0} &   & \num{  53.01} & \pm & \num{ 1.40} & \num{21.0} \\
             16000 & 0.05 & * & \num{70.05} & \pm & \num{ 1.57} & \num{ 3.3} &   & \num{  66.82} & \pm & \num{ 0.81} & \num{ 2.0} &   & \num{  51.75} & \pm & \num{ 0.85} & \num{21.0} \\
             32000 & 0.05 & * & \num{69.25} & \pm & \num{ 1.86} & \num{ 2.4} &   & \num{  67.30} & \pm & \num{ 0.54} & \num{ 2.0} &   & \num{  51.15} & \pm & \num{ 0.65} & \num{21.0} \\
             64000 & 0.05 & * & \num{68.83} & \pm & \num{ 1.44} & \num{ 2.3} &   & \num{  67.06} & \pm & \num{ 0.37} & \num{ 2.0} &   & \num{  50.35} & \pm & \num{ 0.54} & \num{21.0} \\
\midrule

\multicolumn{2}{@{}l}{averages:} & \num{18} & \num{69.03} & \pm & \num{ 1.68} & \num{ 3.5} & \num{ 0} & \num{66.07} & \pm & \num{ 1.06} & \num{ 2.0} & \num{ 0} & \num{53.27} & \pm & \num{ 1.49} & \num{21.0}  \\
      \bottomrule
   \end{tabular*}
\end{table*}

We start our discussion of computational results by reporting on
experiments with the hard instances of~\cite{long2008random}.  These
examples are tailored to using the \(\pm 1\)
classification from learners (option \textit{(i)} in
Section~\ref{sec:boosting-via-integer}). Thus, we use this function for
prediction and voting for every algorithm. The performance of IPBoost,
LPBoost and AdaBoost (using 100 iterations) is presented in
Table~\ref{tab:hard2}. Here, \(N\) is the number of points and \(\gamma\)
refers to the noise level. Note that we randomly split off 20\,\% of the
points for the test set.

On every instance class, IPBoost clearly outperforms LPBoost. 
AdaBoost performs much less well, as expected; it also
uses significantly more base learners. Note, however, that
the \texttt{scikit-learn} implementation of AdaBoost produces much better
results than the one in~\cite{long2008random} (an accuracy of about 53\,\%
as opposed to 33\,\%). As noted above, the
instances are constructed for a \(\pm 1\) classification function. If we
change to \texttt{SAMME.R}, AdaBoost performs much better: slightly worse
that IPBoost, but better than LPBoost.

\paragraph{LIBSVM Instances.}
\label{sec:instances}

We use classification instances from the LIBSVM data sets available at
\url{https://www.csie.ntu.edu.tw/~cjlin/libsvmtools/datasets/}. We selected
the 40 smallest instances. If available, we choose the scaled
version over the unscaled version. Note that 25 instances of those 40
instances come with a corresponding test set. Since the test sets for the
instances \texttt{a1a}--\texttt{a9a} looked suspicious (often more features
and points than in the train set and sometimes only features in one class),
we decided to remove the test sets for these nine instances. This leaves 16
instances with test set. For the other 24, we randomly split off 20\% of
the points as a test set; we provide statistics for the individual instances in
Table~\ref{tab:instanceStats} in the Supplementary
Material~\ref{sec:computational-tests}.

\paragraph{Results for LIBSVM.}

An important choice for the algorithm is how the error matrix~\(\eta\) is
set up, i.e., which of the three options \ref{item:1}--\ref{item:3}
presented in Section~\ref{sec:boosting-via-integer} is used. In preliminary
computations, we compared all three possibilities. It turned out that the
best option is to use the class probabilities \ref{item:2} for \(\eta\) both for
Model~\eqref{mod:baseIP} and when using the base learners in a voting
scheme, which we report here.

\begin{table*}[t]
   \small
   \caption{Aggregated results for LIBSVM: Average test/train accuracies and standard deviations (STD) for three algorithms over 
            10 different seeds, using class probabilities for prediction and voting; we considered 40 instances 
            as outlined in Section~\ref{sec:comp-results}. Column ``\#~best'' represents the number of instances 
            on which the corresponding algorithm performed best (ties possible). Column ``ER'' gives the error rate,
            i.e., \(1/(1-a)\) for the average accuracy~\(a\).}
   \label{tab:aggregated}
   \begin{tabular*}{\textwidth}{@{}l@{\quad\extracolsep{\fill}}rrrrrrrrrrrrr@{}}\toprule
      && \multicolumn{4}{c}{IPBoost} & \multicolumn{4}{c}{LPBoost} & \multicolumn{4}{c}{AdaBoost}\\
         \cmidrule(lr){3-6}\cmidrule(lr){7-10}\cmidrule(l){11-14}
      type & \(\rho\)   & \# best & acc. & STD & ER & \# best & acc. & STD & ER & \# best & acc. & STD & ER \\
      \midrule
 test & \num{  0.1} & \num{24} & \num{80.70} & \num{ 4.08} & \num{ 5.18} & \num{ 7} & \num{80.16} & \num{ 3.93} & \num{ 5.04} & \num{ 9} & \num{79.79} & \num{ 3.82} & \num{ 4.95}\\
 test & \num{0.075} & \num{25} & \num{80.77} & \num{ 3.89} & \num{ 5.20} & \num{ 6} & \num{80.21} & \num{ 3.89} & \num{ 5.05} & \num{ 9} & \num{79.79} & \num{ 3.82} & \num{ 4.95}\\
 test & \num{ 0.05} & \num{26} & \num{80.78} & \num{ 4.07} & \num{ 5.20} & \num{ 7} & \num{80.31} & \num{ 3.73} & \num{ 5.08} & \num{ 8} & \num{79.79} & \num{ 3.82} & \num{ 4.95}\\
 test & \num{0.025} & \num{25} & \num{80.63} & \num{ 3.94} & \num{ 5.16} & \num{ 7} & \num{80.21} & \num{ 3.91} & \num{ 5.05} & \num{ 8} & \num{79.79} & \num{ 3.82} & \num{ 4.95}\\
 test & \num{ 0.01} & \num{26} & \num{80.59} & \num{ 3.91} & \num{ 5.15} & \num{ 7} & \num{79.80} & \num{ 3.67} & \num{ 4.95} & \num{ 7} & \num{79.79} & \num{ 3.82} & \num{ 4.95}\\
      \midrule
train & \num{  0.1} & \num{25} & \num{83.52} & \num{ 2.51} & \num{ 6.07} & \num{ 1} & \num{82.29} & \num{ 2.61} & \num{ 5.65} & \num{15} & \num{84.36} & \num{ 1.99} & \num{ 6.40}\\
train & \num{0.075} & \num{24} & \num{83.94} & \num{ 2.38} & \num{ 6.23} & \num{ 1} & \num{82.52} & \num{ 2.54} & \num{ 5.72} & \num{15} & \num{84.36} & \num{ 1.99} & \num{ 6.40}\\
train & \num{ 0.05} & \num{26} & \num{84.34} & \num{ 2.43} & \num{ 6.38} & \num{ 1} & \num{82.90} & \num{ 2.40} & \num{ 5.85} & \num{14} & \num{84.36} & \num{ 1.99} & \num{ 6.40}\\
train & \num{0.025} & \num{29} & \num{84.97} & \num{ 2.44} & \num{ 6.65} & \num{ 1} & \num{83.39} & \num{ 2.43} & \num{ 6.02} & \num{10} & \num{84.36} & \num{ 1.99} & \num{ 6.40}\\
train & \num{ 0.01} & \num{31} & \num{85.69} & \num{ 2.48} & \num{ 6.99} & \num{ 3} & \num{84.20} & \num{ 2.26} & \num{ 6.33} & \num{ 6} & \num{84.36} & \num{ 1.99} & \num{ 6.40}\\
      \bottomrule
   \end{tabular*}
\end{table*}

Another crucial choice in our approach is the margin
bound~\(\rho\). We ran our code with different values -- the aggregated
results are presented in Table~\ref{tab:aggregated}; the detailed
results are given in the Supplementary Material. We report accuracies on the test set and
train set, respectively. In each case, we report the averages of the
accuracies over 10 runs with a different random seed and their
standard deviations. The accuracies of IPBoost are compared to LPBoost
and AdaBoost. We also report the number \(L\) of learners in the detailed results. Note that the behavior of
AdaBoost is independent of \(\rho\), i.e., the accuracies are the same over
the different values of \(\rho\) in Table~\ref{tab:aggregated}.

The results show that IPBoost outperforms both LPBoost and
AdaBoost. IPBoost clearly outperforms LPBoost, although there are
instances where LPBoost generates slightly better results, both for
the train and the test accuracies. Interestingly, the accuracies of
IPBoost (and LPBoost) increase with respect to AdaBoost, when
considering the test set instead of the training set: less overfitting and
better generalization. For the considered instances the best value for
the margin \(\rho\) was \(0.05\) for LPBoost and IPBoost; AdaBoost has
no margin parameter.

Depending on the size of the instances, typical running times of IPBoost
range from a few seconds up to one hour. We provide details of the running
times in Table~\ref{tab:runtimeStats} in the Supplementary Material. The
average run time of IPBoost for \(\rho = 0.05\)
is \num{1367.78} seconds, while LPBoost uses \num{164.35} seconds and
AdaBoost \num{3.59} seconds. The main bottleneck arises from the solution
of large LP relaxations in each iteration of the algorithm. Note that we
apply the simplex algorithm in order to benefit from hot start after
changing the problem by adding columns or changing bounds. Nevertheless,
larger LPs turned out to be very hard to solve. One explanation for this is
that the matrix is very dense.

Feasible solutions of high quality are often
found after a few seconds via primal heuristics. The solution that is
actually used for constructing the boosted learner is often found long
before the solution process finished, i.e., the algorithm continues to
search for better solutions without further progress. Note that in most
cases, the algorithm is stopped, because no further improving solution was
found, i.e., the stall limit is applied (see
Section~\ref{sec:column-generation}). We have experimented with larger
limits (\(K > 5000\)),
but the quality of the solutions only improved very slightly. This suggests
that the solutions we found are optimal or close to optimal
for~\eqref{mod:baseIP}.

Also interesting is the number of base learners in the best solutions of
IPBoost. The results show that this is around 12 on average for \(\rho =
0.05\); for \(\rho = 0.01\) it is around 18. Thus, the optimal solutions
are inherently sparse and as such for these settings and instances the
sparsification procedure described in Section~\ref{sec:sparsification} will
likely not be successful. However, it seems likely that for instance sets
requiring different margins the situation is different.

We have also experimented with different ways to
handle~\(\rho\). Following~\cite{demiriz2002linear}, one can set up a model
in which the margin~\(\rho\) is a variable to be optimized in addition to
the number of misclassifications. In this model, it is crucial to find the
right balance between the different parts of the objective. For instance,
on can run some algorithm (AdaBoost) to estimate the number of
misclassifications and then adjust the weight factor accordingly. In
preliminary experiments, this option was inferior to the approach described
in this paper; we used an identical approach for LPBoost for
consistency.

\paragraph{Generalization Performance.}

We found that the boosted learners computed via IPBoost generalize rather
well. Figure~\ref{fig:tvt_w1a} gives a representative example for
generalization: here we plot the train and test accuracy of the solutions
encountered by IPBoost within a run, while solving the boosting problem for
various margins. We report more detailed results in
Section~\ref{sec:gener-ipbo-train} in the Supplementary Material.

We observe the following almost monotonous behavior: the smaller
\(\rho\), the more base learners are used and the better the obtained
training accuracy. This is of course expected, since smaller margins
allow more freedom to combine base learners. However, this behavior does not
directly translate to better testing accuracies, which indices
overfitting. Note that IPBoost obtains better average test accuracies
than AdaBoost for every \(\rho\), but this is not always the case for
the train accuracies. This again demonstrates the good generalization
properties of IPBoost.

We would also like to point out that the results in
Figure~\ref{fig:tvt_w1a} and Section~\ref{sec:gener-ipbo-train}
give an indication that the often cited belief that ``solving (close) to
optimality reduces generalization'' is not true in general. In fact,
minimizing the right loss function close to optimality can actually
\emph{help} generalization.

\begin{figure}
  \centering
  \includegraphics[width=0.33\textwidth]{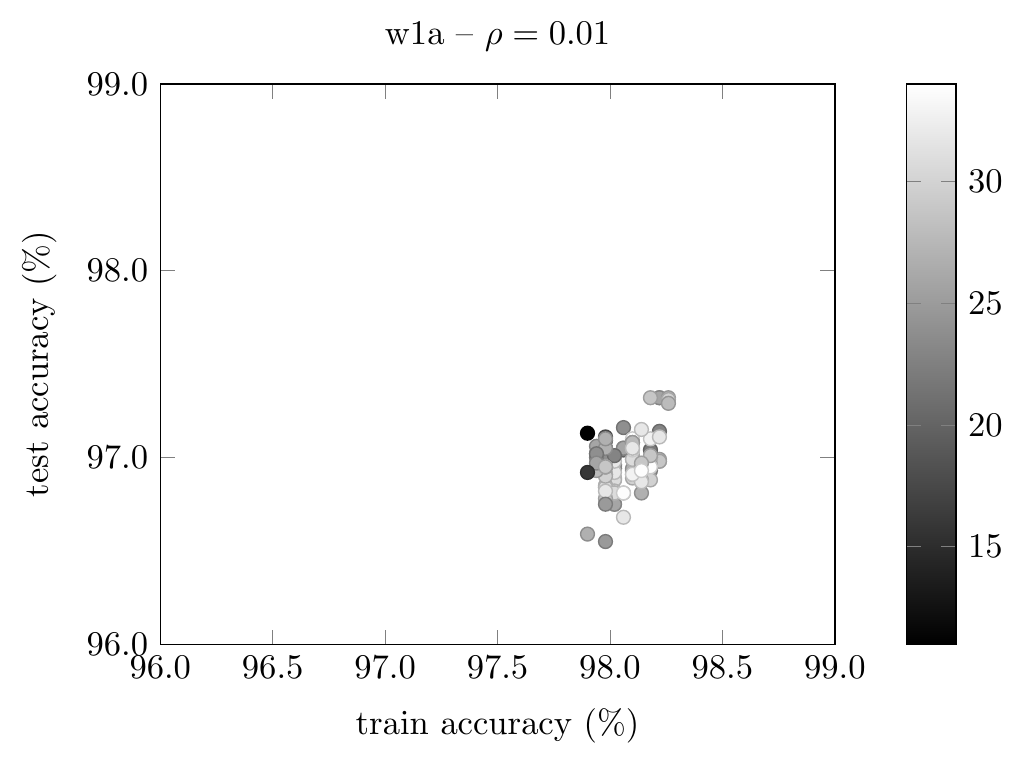}\hfill
  \includegraphics[width=0.33\textwidth]{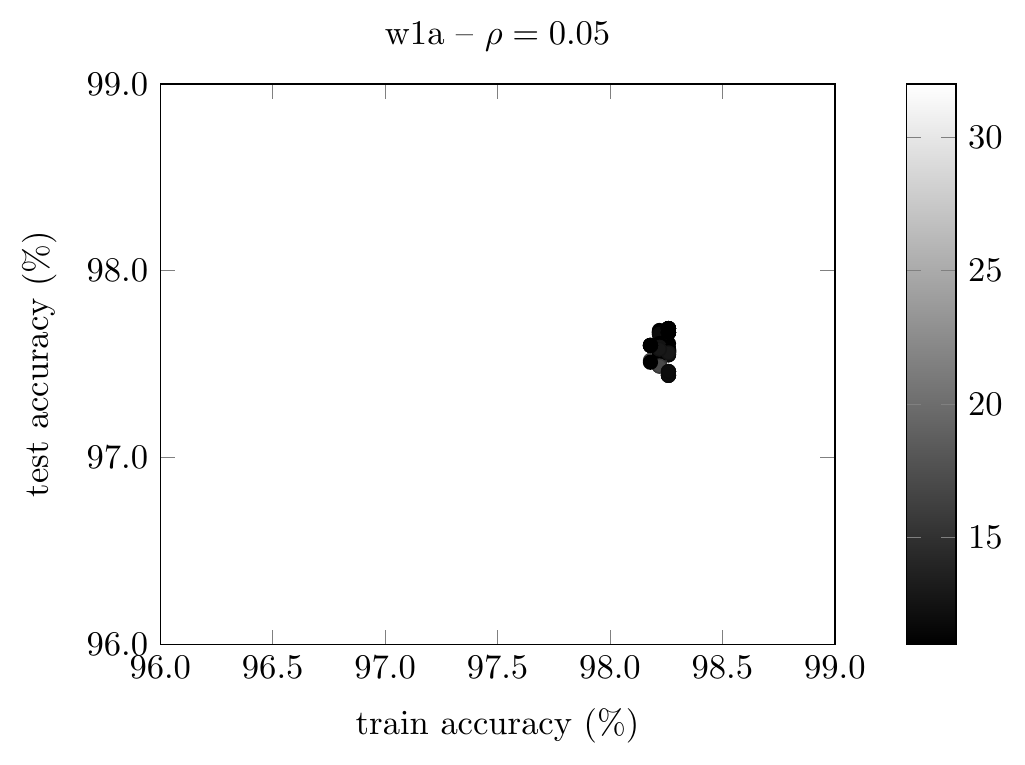}\hfill
  \includegraphics[width=0.33\textwidth]{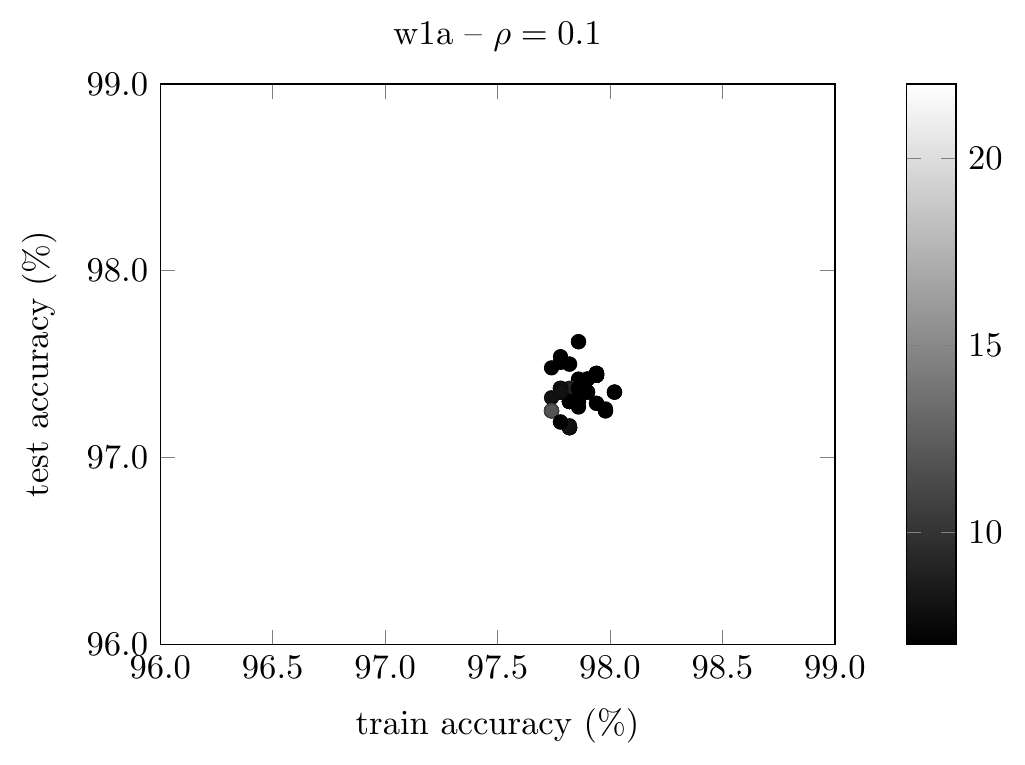}
  \caption{\label{fig:tvt_w1a} \tvtText{w1a}}
\end{figure}

\section{Concluding Remarks}

In this paper, we have first reproduced the observation that boosting based on
column generation, i.e., LP- and IP-boosting, avoids the bad performance on
the well-known hard classes from the literature. More importantly, we have
shown that IP-boosting improves upon LP-boosting and Ada\-Boost on the LIBSVM
instances on which a consistent improvement even by a few percent is not
easy. The price to pay is that the running time with the current
implementation is much longer. Nevertheless, the results are promising, so
it can make sense to tune the performance, e.g., by solving the
intermediate LPs only approximately and deriving tailored heuristics that
generate very good primal solutions, see \cite{BLW08} and
\cite{BLRSW13}, respectively, for examples for column generation in
public transport optimization.

Moreover, our method has a parameter that needs to be tuned,
namely the margin bound \(\rho\). It shares this property with LP-boosting,
where one either needs to set \(\rho\) or a corresponding objective weight.
AdaBoost, however, depends on the number of iterations which also has to
be adjusted to the instance set. We plan to investigate methods based on
the approach in the current paper that avoid the dependence on a parameter.

In conclusion, our approach is suited very well to an offline setting in
which training may take time and where even a small improvement is
beneficial or when convex boosters behave very badly. Moreover, it can
serve as a tool to investigate the general performance of such methods.



\begin{small}

\bibliographystyle{abbrv}
\bibliography{bibliography}

\end{small}

\clearpage

\appendix
\onecolumn

\begin{center}
{\LARGE \ \\ \bigskip Supplementary Material}
\end{center}
\bigskip

\section{Detailed Computational Results}

In the following tables, we report detailed computational results for
our tests. We report problem size statistics in
Table~\ref{tab:instanceStats} and running time statistics in
Table~\ref{tab:runtimeStats}.

For \(\rho = 0.1, 0.075, 0.05, 0.025, 0.01\), we present train
results in Tables~\ref{tab:libsvm:train:0_1}, \ref{tab:libsvm:train:0_075},
\ref{tab:libsvm:train:0_05}, \ref{tab:libsvm:train:0_01} and test results
in Tables~\ref{tab:libsvm:test:0_1}, \ref{tab:libsvm:test:0_075},
\ref{tab:libsvm:test:0_05}, \ref{tab:libsvm:test:0_01}.

\begin{table}
  \small
  \caption{Statistics on LIBSVM instances.}
  \label{tab:instanceStats}

\end{table}

\section{Additional Computational tests}
\label{sec:computational-tests}

\subsection{Generalization of IPBoost: train vs. test error}
\label{sec:gener-ipbo-train}

\begin{figure}[h]
  \centering
  \includegraphics[width=0.32\textwidth]{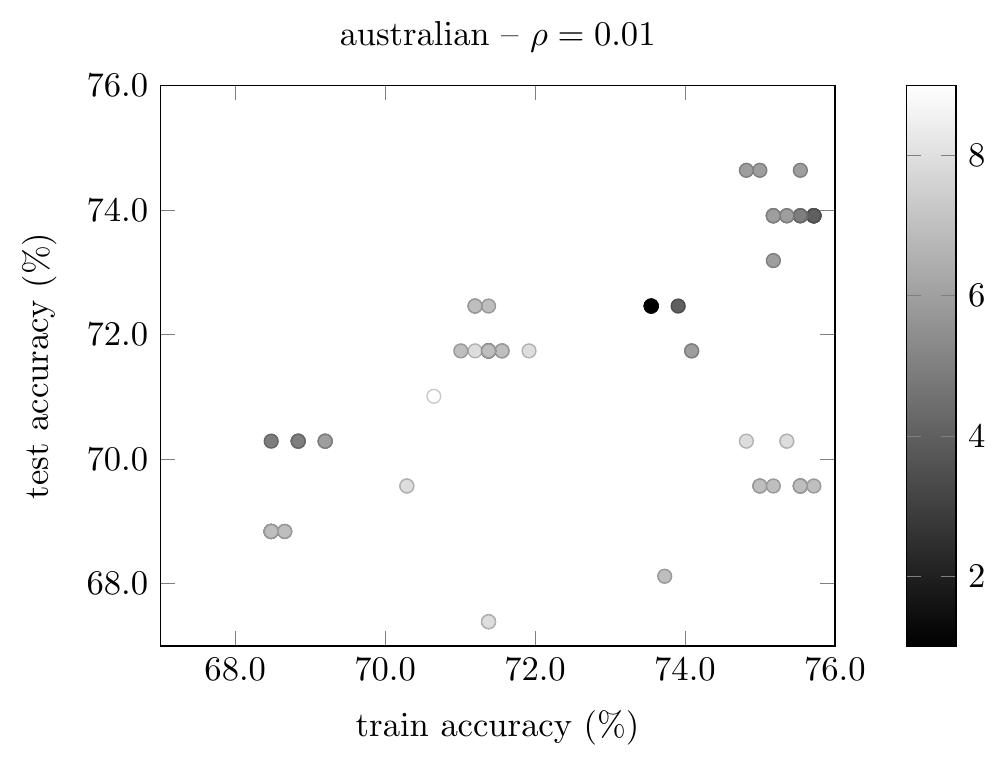}\hfill
  \includegraphics[width=0.32\textwidth]{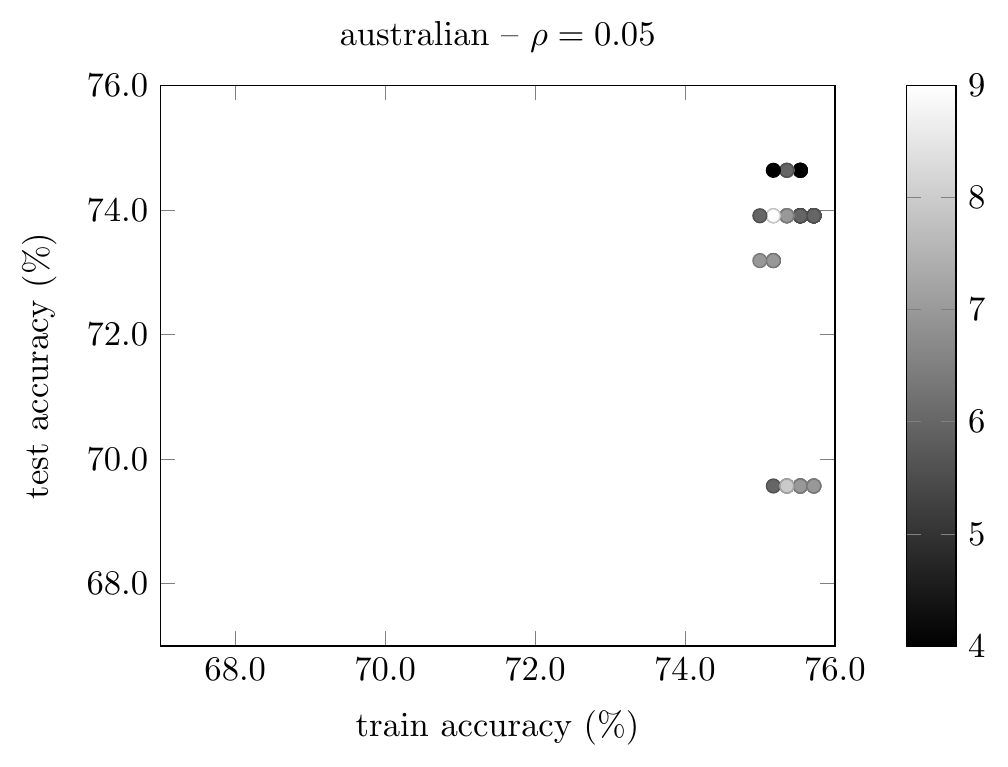}\hfill
  \includegraphics[width=0.32\textwidth]{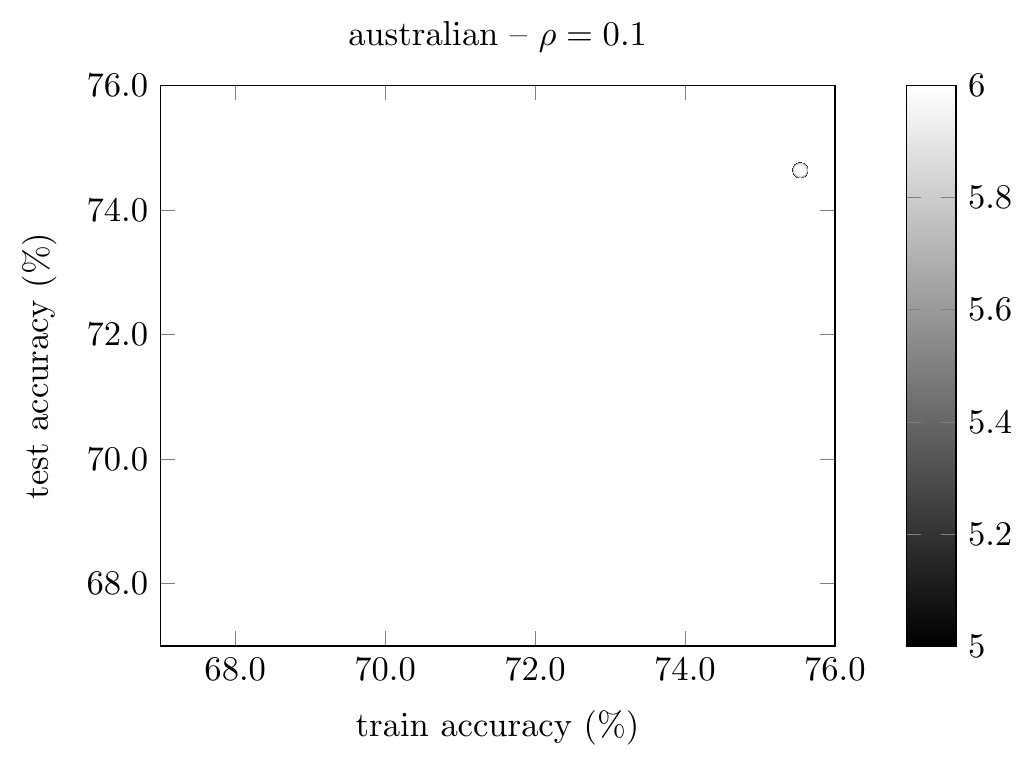}
  \caption{\tvtText{australian}}
\end{figure}

\begin{figure}[h]
  \centering
  \includegraphics[width=0.32\textwidth]{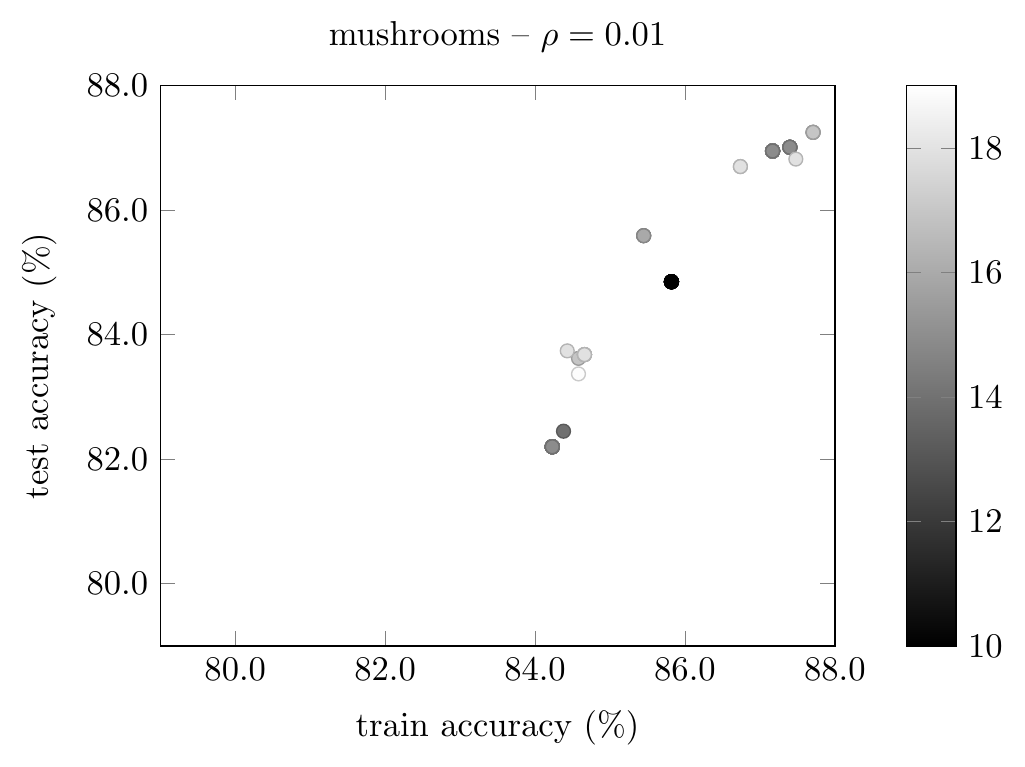}\hfill
  \includegraphics[width=0.32\textwidth]{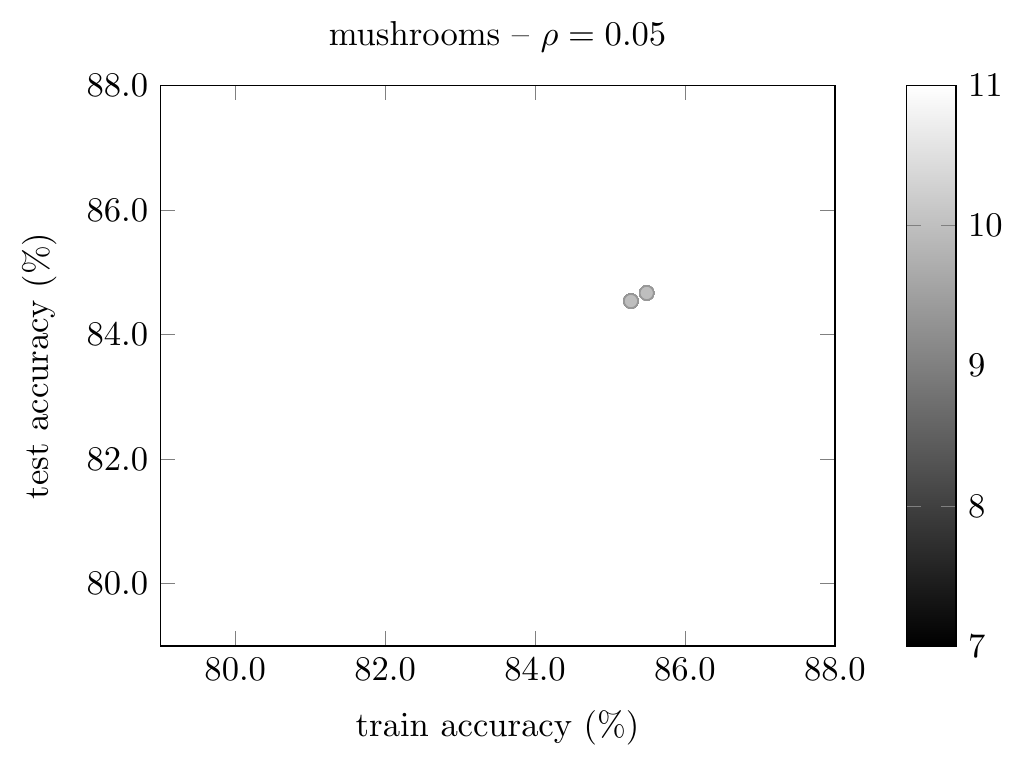}\hfill
  \includegraphics[width=0.32\textwidth]{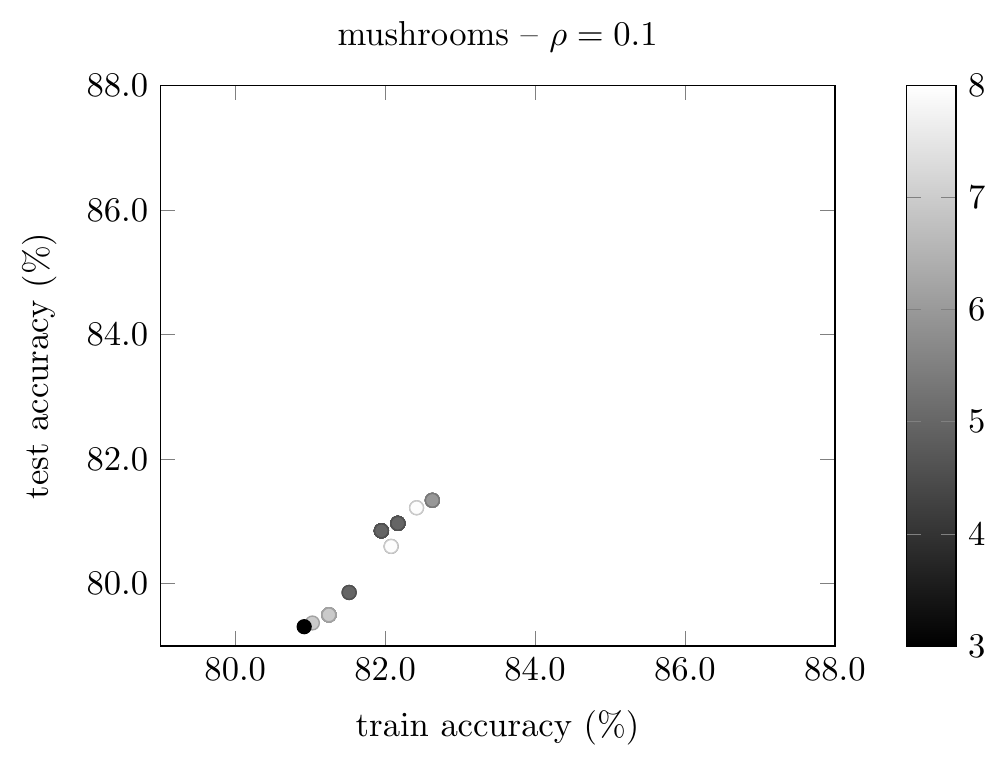}
  \caption{\tvtText{mushrooms}}
\end{figure}

\end{document}